\newtheorem{theorem}{Theorem}
\newtheorem{assumption}{Assumption}
\newtheorem{proof}{Proof}
\newtheorem{remark}{Remark}
\newtheorem{lemma}{Lemma}
\def\hrulefill{\leavevmode\leaders\hrule height 2pt\hfill\kern\z@}
\renewcommand{\algorithmiccomment}[1]{\bgroup\hfill//~#1\egroup}
\algnewcommand{\algorithmicgoto}{\textbf{go to}}%
\algnewcommand{\Goto}[1]{\algorithmicgoto~\ref{#1}}%
\renewenvironment{proof}{{ \par \noindent \textit{Proof:}}}{}
\title{Training Neural Networks Using Features Replay}
\author{
	{Zhouyuan Huo $^1$, Bin Gu$^2$, Heng Huang$^{1,}$$^{2}$\thanks{Corresponding Author.}} \\
	$^1$ Department of Electrical and Computer Engineering, University of Pittsburgh, USA, $^2$ JD.com\\
	\texttt{zhouyuan.huo@pitt.edu, jsgubin@gmail.com} \\
	\texttt{heng.huang@pitt.edu} \\
}
\begin{document}
	\maketitle
	\begin{abstract}
Training a neural network using backpropagation algorithm requires passing error gradients sequentially through the network. The backward locking prevents us from updating network layers in parallel and fully leveraging the computing resources.  Recently, there are several works trying to decouple and parallelize the backpropagation algorithm. However, all of them suffer from severe accuracy loss or memory explosion when the neural network is deep.  To address these challenging issues, we propose a novel parallel-objective formulation for the objective function of the neural network. After that,  we introduce features replay algorithm and prove that it is guaranteed to converge to critical points for the non-convex problem under certain conditions. Finally, we apply our method to training deep convolutional neural networks, and the experimental results show that the proposed method achieves {faster} convergence, {lower} memory consumption, and {better} generalization error than compared methods.
\end{abstract}
	
\section{Introduction}
In recent years, the deep convolutional neural networks have made great breakthroughs in computer vision \cite{he2016deep,huang2016densely,krizhevsky2012imagenet,lecun2015deep,simonyan2014very,szegedy2015going}, natural language processing \cite{kalchbrenner2014convolutional,kim2014convolutional,santos2014learning,zhang2015character}, and reinforcement learning \cite{lillicrap2015continuous,mnih2016asynchronous,mnih2013playing,mnih2015human}.
The growth of the depths of the neural networks is one of the most critical factors contributing to the success of deep learning, which has been verified both in practice  \cite{he2016deep,huang2016densely} and in theory \cite{bengio2009learning,eldan2016power,telgarsky2016benefits}.
Gradient-based methods are the major methods to train deep neural networks, such as stochastic gradient descent (SGD) \cite{robbins1951stochastic}, ADAGRAD \cite{duchi2011adaptive}, RMSPROP \cite{hinton2012lecture} and ADAM \cite{kingma2014adam}. As long as the loss functions are differentiable, we can compute the gradients of the networks using backpropagation algorithm  \cite{rumelhart1988learning}.
The backpropagation algorithm requires two passes of the neural network, the forward pass to compute activations and the backward pass to compute gradients. As shown in Figure \ref{intro} (BP), error gradients are repeatedly propagated from the top (output layer) all the way back to the bottom (input layer) in the backward pass. The sequential propagation of the error gradients is called backward locking because all layers of the network are locked until their dependencies have executed. According to the benchmark report in \cite{benchmark}, the computational time of the backward pass is about twice of the computational time of the forward pass. When networks are quite deep, backward locking becomes the bottleneck of making good use of computing resources, preventing us from updating layers in parallel.
	
There are several works trying to break the backward locking in the backpropagation algorithm. \cite{carreira2014distributed} and \cite{taylor2016training} avoid the backward locking by removing the backpropagation algorithm completely. In \cite{carreira2014distributed}, the authors proposed the method of auxiliary coordinates (MAC) and simplified the nested functions by imposing quadratic penalties. Similarly, \cite{taylor2016training} used Lagrange multipliers to enforce equality constraints between auxiliary variables and activations. Both of the reformulated problems do not require backpropagation algorithm at all and are easy to be parallelized.  However, neither of them have been applied to training convolutional neural networks yet.
There are also several works breaking the dependencies between groups of layers or modules in the backpropagation algorithm.
In \cite{jaderberg2016decoupled}, the authors proposed to remove the backward locking by employing the decoupled neural interface to approximate error gradients (Figure \ref{intro} DNI).  \cite{balduzzi2015kickback,nokland2016direct} broke the local dependencies between successive layers and made all hidden layers receive error information from the output layer directly. In the backward pass, we can use the synthetic gradients or the direct feedbacks to update the weights of all modules without incurring any delay. However, these methods work poorly when the neural networks use very deep architecture. In \cite{ddgpaper,pmlr-v80-huo18a}, the authors proposed decoupled parallel backpropagation by using stale gradients, where modules are updated with the gradients from different timestamps (Figure \ref{intro} DDG). However, it requires large amounts of memory to store the stale gradients and suffers from the loss of accuracy.
	
    In this paper, we propose feature replay algorithm which is free of the above three issues: backward locking, memory explosion and accuracy loss. The main contributions of our work are summarized as follows:
	\begin{itemize}[leftmargin=0.2in]
        \item Firstly, we propose a novel parallel-objective formulation for the objective function of the neural networks in Section \ref{sec_alg}. Using this new formulation, we break the backward locking by introducing features replay algorithm, which is easy to be parallelized.
\item  Secondly, we provide the theoretical analysis in Section \ref{sec_conv} and prove that the proposed method is guaranteed to converge to critical points for the non-convex problem under certain conditions.
\item      Finally, we validate our method with experiments on training deep convolutional neural networks in Section \ref{sec_exp}. Experimental results demonstrate that the proposed method achieves {faster} convergence, {lower} memory consumption, and {better} generalization error than  compared methods.
	\end{itemize}
	
	\begin{figure}[t]
	\centering
	\includegraphics[width=5.4in]{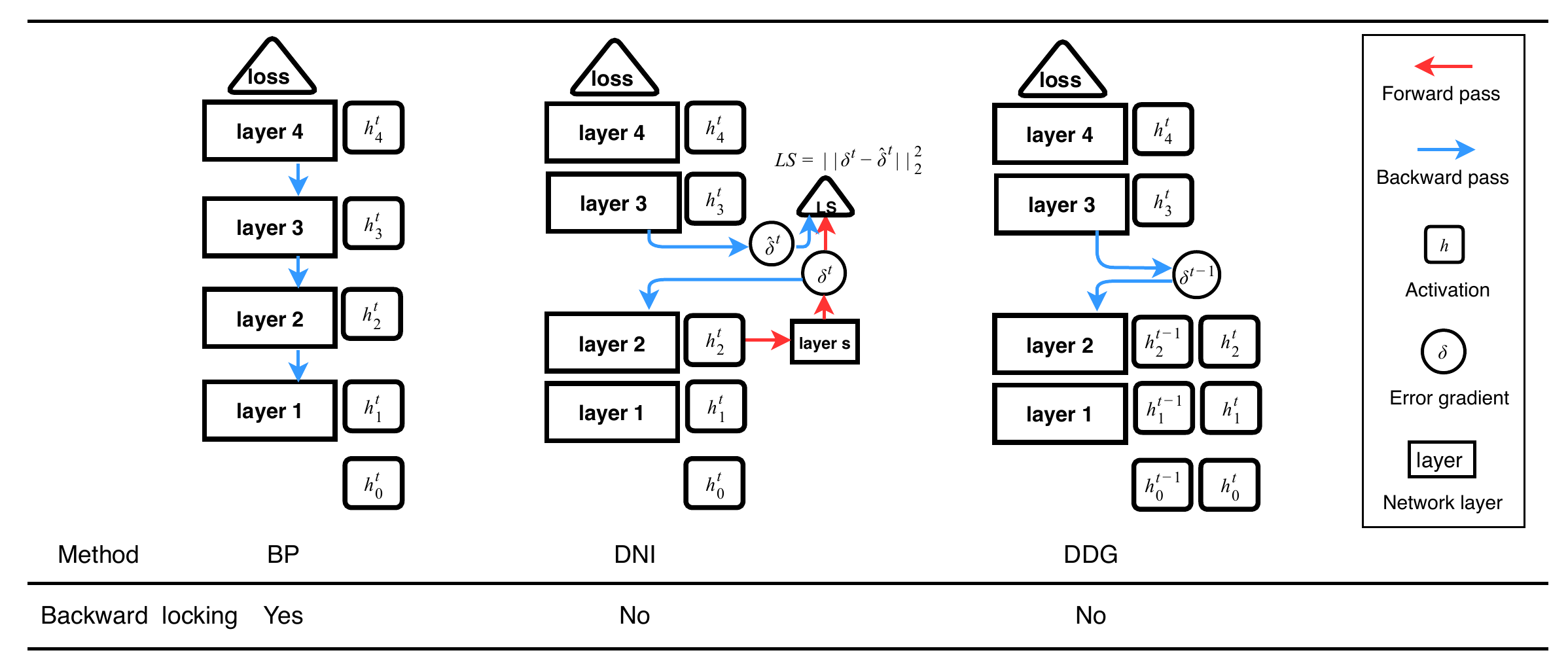}
	\caption{Illustrations of the backward pass of the backpropagation algorithm (BP) \cite{rumelhart1988learning}, decoupled neural interface (DNI) \cite{jaderberg2016decoupled} and decoupled parallel backpropagation (DDG) \cite{ddgpaper}. DNI breaks the backward locking by synthesizing error gradients. DDG breaks the backward locking by storing stale gradients.   }
	\label{intro}
\end{figure}
	
	\section{ Background }
We assume there is a feedforward neural network with $L$ layers,  where  $w=[w_1,w_2,...,w_L] \in \mathbb{R}^d$ denotes the weights of all layers. The computation in each layer can be represented as taking an input $h_{l-1}$ and producing an activation $h_{l} = F_l(h_{l-1}; w_l)$ using weight $w_l$.
	Given a loss function $f$ and target $y$, we can formulate the objective function of the neural network $f(w) $ as follows:
	\begin{eqnarray}
	\min\limits_{w}&& f(h_L,y) \nonumber \\
s.t.&&	h_l = F_l(h_{l-1}; w_l) \hspace{0.4cm}\text{for all } \hspace{0.2cm} l \in \{1,2,...,L\}
	\label{obj}
	\end{eqnarray}	
	where $h_0$  denotes the input data $x$.
	By using stochastic gradient descent, the weights of the network are updated in the direction of their negative gradients of the loss function following:
	\begin{eqnarray}
	w_l^{t+1} & = & w_l^{t} - \gamma_t  \cdot g_l^t \hspace{0.4cm}\text{for all } \hspace{0.2cm} l \in \{1,2,...,L\}
	\label{sgd}
	\end{eqnarray}
	where $\gamma_t$ denotes the stepsize and $g_l^t :=\frac{\partial f_{x^t}(w^t)}{\partial w^t_{l}}$ denotes the gradient of the loss function  (\ref{obj})  regarding  $w^t_l$ with input samples $x^t$.  

	The backpropagation algorithm \cite{rumelhart1988learning} is utilized to compute the gradients for the neural networks.  At iteration $t$, it requires two passes over the network:
	in the forward pass, the activations of all layers are computed  from the bottom layer $l=1$ to the top layer $l=L$  following: $h_l^t = F_l(h^t_{l-1}; w_l^t)$;
	in the backward pass, it applies the chain rule and propagates error gradients through the network from the top layer $l=L$ to the bottom layer $l=1$ following:
		\begin{eqnarray}
	\frac{\partial f_{x^t}(w^t)}{\partial w^t_{l}} = \frac{\partial h^t_l}{\partial w^t_l} \times  \frac{\partial f_{x^t}(w^t)  }{\partial h_l^t}\hspace{0.5cm} \text{and} \hspace{0.5cm}
	\frac{\partial f_{x^t}(w^t)}{\partial h^t_{l-1}} =  \frac{\partial h^t_l}{\partial h^t_{l-1}} \times \frac{\partial f_{x^t}(w^t)  }{\partial h^t_l}.
	\label{backward2}
	\end{eqnarray}
    According to (\ref{backward2}), computing gradients for the weights $w^l$ of the layer $l$ is dependent on the error gradient $\frac{\partial f_{x^t}(w^t)  }{\partial h_l^t}$ from the layer $l+1$, which is known as backward locking.  Therefore, the backward locking prevents all layers from updating before receiving error gradients from dependent layers. When the networks are  deep, the backward locking becomes the bottleneck in the training process.
	
	\section{Features Replay}
\label{sec_alg}
\begin{figure}[t]
	\centering
	\includegraphics[width=5.4in]{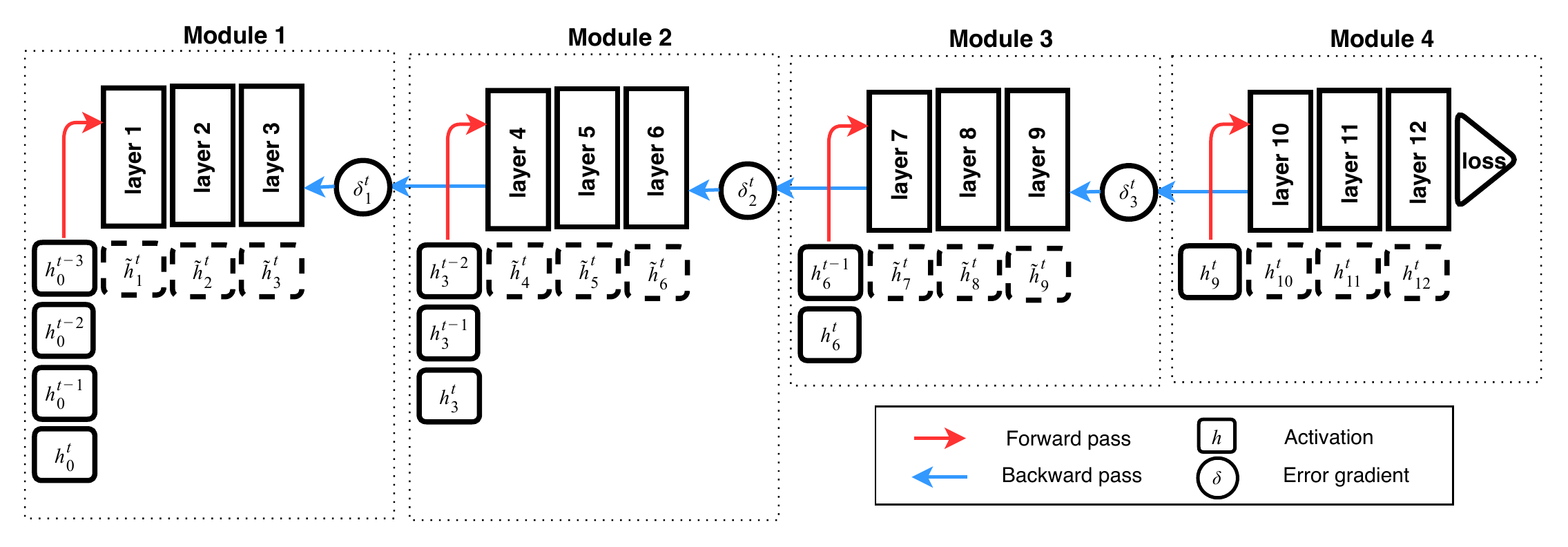}
    \caption{Backward pass of Features Replay Algorithm. We divide a $12$-layer neural network into four modules, where each module stores its input history  and a stale error gradient from the upper module. At each iteration, all modules compute the activations by inputting features from the history and compute the gradients by applying the chain rule. After that, they receive the error gradients from the upper modules for the next iteration.}
	\label{esddg}
\end{figure}
In this section, we propose a novel parallel-objective formulation for the objective function of the neural networks. Using our new formulation, we break the backward locking in the backpropagation algorithm by using features replay algorithm.
\subsection{Problem Reformulation}
As shown in Figure \ref{esddg}, we assume to divide an $L$-layer feedforward neural network into $K$ modules where $K \ll L$, such that ${w}= [w_{\mathcal{G}(1)},w_{\mathcal{G}(2)},..., w_{\mathcal{G}(K)}] \in \mathbb{R}^d$ and $\mathcal{G}(k)$ denotes the layers in the module $k$. Let $L_{k}$ represent the last layer of the module $k$, the output of this module can be written as $h_{L_k}$.
The error gradient variable is denoted as  $\delta_k^t$ ,  which is used for the gradient computation of the module  $k$. We can split the problem (\ref{obj}) into $K$ subproblems. The task of the module $k$ (except $k=K$) is minimizing the least square error between the error gradient variable $\delta_k^t$ and $ \frac{\partial f_{h^t_{L_k} }(w^t)}{\partial h^t_{L_k} }$ which is the gradient of the loss function regarding  $h^t_{L_k}$ with input $h^t_{L_k}$ into the module $k+1$, and the task of the module $K$ is minimizing the loss between the prediction $h^t_{L_K}$ and the real label $y^t$. From this point of view, we propose a novel parallel-objective loss function at iteration $t$ as follows:
\begin{eqnarray}
	\min\limits_{w, \delta} && \sum\limits_{k=1}^{K-1} \left\| \delta_k^t - \frac{\partial f_{h^t_{L_k} }(w^t)}{\partial h^t_{L_k} } \right\|_2^2 + f\left(h_{L_{K}}^t , y^t\right) \nonumber \\
	s.t.&& h^t_{L_k} = F_{\mathcal{G}(k)} (h^t_{L_{k-1}}; w^t_{\mathcal{G}(k)})  \hspace{0.4cm}\text{for all } \hspace{0.2cm} k \in \{1,...,K\},
	\label{reform1}
\end{eqnarray}
where $h^t_{L_0}$  denotes the input data $x^t$. It is obvious that the optimal solution for the left term of the problem (\ref{reform1}) is $\delta_k^t = \frac{\partial f_{h^t_{L_k} }(w^t)}{\partial h^t_{L_k} }, \text{ for all }  k\in \left\{ 1,...,K-1\right\}$.  In other words, the optimal solution of the module $k$ is dependent on the output of the upper modules. Therefore, minimizing the problem (\ref{obj}) with the backpropagation algorithm is equivalent to minimizing the problem (\ref{reform1}) with the first $K-1$ subproblems obtaining optimal solutions.

 \begin{algorithm}[t]
 	\caption{Features Replay Algorithm}
 	\begin{algorithmic}[1]
 		\State
 		\textbf{Initialize:} weights $w^0 = [w^0_{\mathcal{G}(1)}, ..., w^0_{\mathcal{G}(K)}] \in \mathbb{R}^d$ and stepsize sequence $\{\gamma_t\}$;
 		\For{$t=0,1,2,\dots, T-1$}
 		\State Sample mini-batch $(x^t, y^t)$  from the dataset and let $h_{L_0}^t=x^t$;
 		\For { $k=1, \dots, K$ } \label{marker}
 		\State Store $h_{L_{k-1}}^t$ in the memory;
 		\State Compute $h_{L_k}^t$ following  $ h_{L_k}^t = F_{\mathcal{G}(k)}\left(h_{L_{k-1}}^t; w^t_{\mathcal{G}(k)}\right)$; \hspace{0.6cm}$\leftarrow$ Play \hspace{0.4cm}
 		\rlap{\smash{$\left.\begin{array}{@{}c@{}}\\{}\\{}\\{}\\{}\\{} \end{array}\right\}%
 				\begin{tabular}{l} Forward\\ pass\end{tabular}$}}			
 		\State Send $h_{L_k}^t$ to the module $k+1$ if $k<K$;
 		\EndFor
 		\State Compute loss  $f(w^t)=f\left(h_{L_K}^t, y^t\right)$;
 		\For{$k=1, \dots, K$ \textbf{in parallel}}
 		\State Compute $\tilde{h}^t_{L_k}$  following $\tilde{h}_{L_k}^t = F_{\mathcal{G}(k)}(h_{L_{k-1}}^{t+k-K}; w^t_{\mathcal{G}(k)});$ \hspace{0.5cm}$\leftarrow$ Replay
 		\State Compute gradient $g^t_{\mathcal{G}(k)}$ following  (\ref{backward6});		   \hspace{4.5cm} \rlap{\smash{$\left.\begin{array}{@{}c@{}}\\{}\\{}\\{}\\{}\\{}\end{array}\right\}%
 				\begin{tabular}{l} Backward\\ pass\end{tabular}$}}	
 		\State Update weights: 	$w^{t+1}_{\mathcal{G}(k)} =  w^t_{\mathcal{G}(k)} - \gamma_t \cdot g^t_{\mathcal{G}(k)};$
 		\State Send $ \frac{\partial f_{h^{t+k-K}_{L_{k-1}}}(w^{t})}{\partial h^{t+k-K}_{L_{k-1}} }$ to the module $k-1$ if $k > 1$;
 		\EndFor
 		\EndFor
 	\end{algorithmic}
 	\label{alg}
 \end{algorithm}
\subsection{Breaking Dependencies by Replaying Features}
Features replay algorithm is introduced in Algorithm \ref{alg}. In the forward pass, immediate features are generated and passed through the network, and the module $k$ keeps a history of its input with size $K-k+1$. To break the dependencies between modules in the backward pass,  we propose to compute the gradients of the modules using immediate features from different timestamps. 
\textit{Features replay} denotes that immediate feature $h_{L_{k-1}}^{t+k-K}$ is input into the module $k$ for the first time in the forward pass at iteration $t+k-K$, and it is input into the module $k$ for the second time in the backward pass at iteration $t$. If $t+k-K < 0$, we set $h^{t+k-K}_{L_{k-1}} = 0$ . Therefore, the new problem can be written as:
 \begin{eqnarray}
 \min\limits_{w, \delta} && \sum\limits_{k=1}^{K-1} \left\| \delta_k^{t} - \frac{\partial f_{\tilde h^{t}_{L_k}}(w^t)}{\partial  \tilde h^{t}_{L_k}} \right\|_2^2 + f( \tilde h^{t}_{L_K}, y^t) \nonumber \\
 s.t.&& \tilde h^{t}_{L_k} = F_{\mathcal{G}(k)} (h^{t+k-K}_{L_{k-1}}; w^t_{\mathcal{G}(k)})  \hspace{0.4cm}\text{for all } \hspace{0.2cm} k \in \{1,...,K\}.
 \label{new}
 \end{eqnarray}
where $ \frac{\partial f_{\tilde h^{t}_{L_k}}(w^t)}{\partial  \tilde h^{t}_{L_k}}$ denotes the gradient of the loss $f(w^t)$ regarding $\tilde h^{t}_{L_k}$ with input $\tilde h^{t}_{L_k}$ into the module $k+1$.
It is important to note that it is not necessary to get the optimal solutions for the first $K-1$ subproblems while we do not compute the optimal solution for the last subproblem.
 To avoid the tedious computation, we  make a trade-off  between the error of the left term in (\ref{new}) and the computational time by making:
\begin{eqnarray}
\delta_k^{t} & =& \frac{\partial f_{ h^{t+k-K}_{L_k} }(w^{t-1})}{\partial h^{t+k-K}_{L_k} } \hspace{0.4cm}\text{for all } \hspace{0.2cm} k \in \{1,...,K-1\},
\label{approx}
\end{eqnarray}
where $\frac{\partial f_{h^{t+k-K}_{L_k}}(w^{t-1})}{\partial h^{t+k-K}_{L_k} }$ denotes the gradient of the loss $f(w^{t-1})$ regarding $h^{t+k-K}_{L_k} $ with input $h^{t+k-K}_{L_k} $ into the module $k+1$ at the previous iteration. As $t \rightarrow \infty$, $w^t \approx w^{t-1} \approx w^{t+k-K}$ such that $\tilde{h}_{L_k}^t \approx h_{L_k}^{t+k-K}$ and  $\left\| \frac{\partial f_{h^{t+k-K}_{L_k}}(w^{t-1})}{\partial h^{t+k-K}_{L_k} }  - \frac{\partial f_{\tilde h^{t}_{L_k}}(w^t)}{\partial  \tilde h^{t}_{L_k}} \right\|^2_2\approx 0 $ for all $k \in \left\{1,...,K-1 \right\}$. Therefore, (\ref{approx})  is a reasonable approximation of the optimal solutions to the first $K-1$ subproblems in (\ref{new}).  In this way, we break the backward locking in the backpropagation algorithm because the error gradient variable $\delta_k^t$ can be determined at the previous iteration $t-1$ such that all modules are independent of each other at iteration $t$.  Additionally, we compute the gradients inside each module following:
\begin{eqnarray}
\frac{\partial f_{h^{t+k-K}_{L_{k-1}}}(w^t)}{\partial w^t_l}  = \frac{\partial \tilde{h}^t_{L_k}}{\partial w^t_{l}}  \times \delta_k^t \hspace{0.5cm} \text{and} \hspace{0.5cm}
\frac{\partial f_{h^{t+k-K}_{L_{k-1}}}(w^t)}{\partial  \tilde{h}^t_{l}   }  = \frac{\partial \tilde{h}^t_{L_k}}{\partial  \tilde{h}^t_{l}} \times  \delta_k^t,\label{backward6}
\end{eqnarray}
where $l \in \mathcal{G}(k)$.  At the end of each iteration, the module $k$ sends  $\frac{\partial f_{h^{t+k-K}_{L_{k-1}}}(w^{t})}{\partial h^{t+k-K}_{L_{k-1}} }$ to  module $k-1$ for the computation of the next iteration. 

\section{Convergence Analysis}
	\label{sec_conv}
    In this section, we provide theoretical analysis for Algorithm \ref{alg}. Analyzing the convergence of the problem (\ref{new}) directly is difficult, as it involves the variables of different timestamps. Instead, we solve this problem by building a connection between the gradients of Algorithm \ref{alg} and stochastic gradient descent in Assumption \ref{ass_4}, and prove that the proposed method is guaranteed to converge to critical points for the non-convex problem (\ref{obj}).	
	\begin{assumption} \textbf{(Sufficient direction)}
		We assume that the expectation of the descent direction $\mathbb{E} \left[ \sum\limits_{k=1}^K g^t_{\mathcal{G}(k)} \right]$ in Algorithm \ref{alg} is a sufficient descent direction of the loss $f(w^t)$ regarding $ w^t $. Let $\nabla f(w^t)$ denote the full gradient of the loss, there exists a constant $\sigma > 0$ such that,
		\begin{eqnarray}
		\left< \nabla f(w^t), \mathbb{E} \left[ \sum\limits_{k=1}^K g^t_{\mathcal{G}(k)} \right] \right>& \geq & \sigma \|\nabla f(w^t)\|^2_2.
		\end{eqnarray}
		\label{ass_4}
	\end{assumption}
Sufficient direction assumption guarantees that the model is moving towards the descending direction of the loss function.
\begin{assumption} Throughout this paper, we make two assumptions following \cite{bottou2016optimization}:\\
$\bullet$ 	\textbf{(Lipschitz-continuous gradient)}	The gradient of $f$ is Lipschitz continuous with a constant $L > 0$, such that for any $  x, y \in \mathbb{R}^d$, it is satisfied that
$
\left\| \nabla f(x) - \nabla f(y) \right\|_2 \leq L \|x - y\|_2.
$\\
  $\bullet$	\textbf{(Bounded variance)} We assume that the second moment of the descent direction in Algorithm \ref{alg} is upper bounded. There exists a  constant $M \geq 0$ such that $
\mathbb{E}	\left\| \sum\limits_{k=1}^K g^t_{\mathcal{G}(k)} \right\|^2_2\leq  M.
$
\label{lips_bg}
\end{assumption}
According to the equation regarding variance $\mathbb{E} \left\|\xi - \mathbb{E} \left[\xi \right]\right\|^2_2 =\mathbb{E}	\left\| \xi \right\|^2_2 - \left\| \mathbb{E} \left[ \xi \right] \right\|^2_2,$  the variance of the  descent direction $\mathbb{E} \left\| \sum\limits_{k=1}^K g^t_{\mathcal{G}(k)} - \mathbb{E} \left[ \sum\limits_{k=1}^K g^t_{\mathcal{G}(k)} \right]\right\|^2_2 $ is guaranteed to be  less than  $M$.
According to the above assumptions, we prove the convergence rate for the proposed method under two  circumstances  of  $\gamma_t$. Firstly, we analyze the convergence for Algorithm \ref{alg} when $\gamma_t$ is fixed and prove that the learned model will converge sub-linearly to the neighborhood of the critical points for the non-convex problem.
\begin{theorem}
	\label{them1}
	Assume that Assumptions \ref{ass_4} and \ref{lips_bg} hold, and the fixed stepsize sequence $\{ \gamma_t\}$ satisfies $\gamma_t = \gamma  \text{ for all } t \in \{0,1,...,T-1\}$.  In addition, we assume $w^*$ to be the optimal solution to $f(w)$. Then, the output of Algorithm \ref{alg} satisfies that:
	{\small
		\begin{eqnarray}
	\frac{1}{T} \sum\limits_{t=0}^{T-1}\mathbb{E}  \left\| \nabla f(w^t) \right\|^2_2 &\leq& \frac{ f(w^0) - f(w^*) }{\sigma \gamma T} + \frac{\gamma L M} {2\sigma }.
		\end{eqnarray} }
\end{theorem}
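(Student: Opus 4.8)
The plan is to run the standard descent-lemma argument for stochastic first-order methods, using Assumption \ref{ass_4} to convert the algorithm's replayed descent direction into a guaranteed decrease measured against the true gradient $\nabla f(w^t)$. First I would invoke the Lipschitz-continuous gradient property from Assumption \ref{lips_bg}, which yields the quadratic upper bound
\begin{eqnarray}
f(w^{t+1}) &\leq& f(w^t) + \left< \nabla f(w^t), w^{t+1} - w^t \right> + \frac{L}{2} \left\| w^{t+1} - w^t \right\|_2^2. \nonumber
\end{eqnarray}
Substituting the update rule $w^{t+1} - w^t = -\gamma \sum_{k=1}^K g^t_{\mathcal{G}(k)}$ from Algorithm \ref{alg} and taking expectation over the sampled mini-batch gives
\begin{eqnarray}
\mathbb{E}\left[ f(w^{t+1}) \right] &\leq& \mathbb{E}\left[ f(w^t) \right] - \gamma \left< \nabla f(w^t), \mathbb{E}\left[ \sum_{k=1}^K g^t_{\mathcal{G}(k)} \right] \right> + \frac{L \gamma^2}{2} \mathbb{E}\left\| \sum_{k=1}^K g^t_{\mathcal{G}(k)} \right\|_2^2. \nonumber
\end{eqnarray}

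Next I would apply the two structural assumptions to the two right-hand terms. The sufficient-direction Assumption \ref{ass_4} lower-bounds the inner product by $\sigma \|\nabla f(w^t)\|_2^2$, while the bounded-variance half of Assumption \ref{lips_bg} upper-bounds the second moment by $M$. This collapses the recursion to
\begin{eqnarray}
\mathbb{E}\left[ f(w^{t+1}) \right] &\leq& \mathbb{E}\left[ f(w^t) \right] - \sigma \gamma \left\| \nabla f(w^t) \right\|_2^2 + \frac{L \gamma^2 M}{2}. \nonumber
\end{eqnarray}
Rearranging to isolate $\sigma \gamma \|\nabla f(w^t)\|_2^2$ and summing the telescoping $f$-differences over $t = 0, \dots, T-1$ then yields
\begin{eqnarray}
\sigma \gamma \sum_{t=0}^{T-1} \mathbb{E}\left\| \nabla f(w^t) \right\|_2^2 &\leq& f(w^0) - \mathbb{E}\left[ f(w^T) \right] + \frac{T L \gamma^2 M}{2}. \nonumber
\end{eqnarray}
Finally I would use optimality of $w^*$, i.e. $f(w^T) \geq f(w^*)$, to replace $-\mathbb{E}[f(w^T)]$ by $-f(w^*)$, and divide both sides by $\sigma \gamma T$, which produces exactly the claimed bound.

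I expect no genuine obstacle in this chain of inequalities, since each step is routine once the two assumptions are in place: the entire difficulty of the features-replay analysis has been front-loaded into Assumption \ref{ass_4}. That assumption is the conceptual crux, as it certifies that, despite the mismatch between the true gradient and the gradient computed from stale, replayed features of different timestamps in (\ref{approx}) and (\ref{backward6}), the resulting direction still correlates positively with $\nabla f(w^t)$ on average. If one wished to push further, the real work would be verifying Assumption \ref{ass_4} directly from the staleness structure of the algorithm; but for this theorem it is taken as a hypothesis, so the argument reduces to the telescoping computation above.
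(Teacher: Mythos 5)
Your proposal is correct and follows essentially the same route as the paper: the paper factors the descent-lemma step (Lipschitz bound, update substitution, Assumptions \ref{ass_4} and \ref{lips_bg}) into a separate Lemma \ref{lem1} and then telescopes with $f(w^T)\geq f(w^*)$ in the proof of Theorem \ref{them1}, which is exactly your chain of inequalities presented in one pass.
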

Therefore, the best solution we can obtain is controlled by  $\frac{\gamma L M} {2\sigma }$. We also prove that Algorithm \ref{alg}  can guarantee the convergence to critical points for the non-convex problem, as long as the diminishing stepsizes satisfy the requirements in \cite{robbins1951stochastic} such that:
\begin{eqnarray}
\label{dimi}
\lim_{T\rightarrow \infty} \sum\limits_{t=0}^{T-1} \gamma_t = \infty \hspace{0.5cm} \text{and} \hspace{0.5cm}
\lim_{T\rightarrow \infty} \sum\limits_{t=0}^{T-1} \gamma_t^2 < \infty.
\end{eqnarray}

\begin{theorem}
	Assume that Assumptions \ref{ass_4} and \ref{lips_bg} hold and the diminishing stepsize sequence $\{ \gamma_t\}$ satisfies (\ref{dimi}).  In addition, we assume $w^*$ to be the optimal solution to $f(w)$. Setting $\Gamma_T = \sum\limits_{t=0}^{T-1} \gamma_t $, then the output of Algorithm \ref{alg} satisfies that:
	\begin{eqnarray}
	\frac{1}{\Gamma_T} \sum\limits_{t=0}^{T-1} \gamma_t\mathbb{E} \left\| \nabla f(w^t) \right\|^2_2& \leq& \frac{f(w^0) - f(w^*) }{ \sigma \Gamma_T} + \frac{LM}{2\sigma} \frac{\sum\limits_{t=0}^{T-1} \gamma_t^2}{\Gamma_T}.
	\label{them2_iq_1001}
	\end{eqnarray}
	\label{them2}
\end{theorem}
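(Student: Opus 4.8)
The plan is to mirror the argument for Theorem \ref{them1}, but to keep the stepsize $\gamma_t$ general instead of fixing it to a constant, and then telescope a per-iteration descent inequality. The starting point is the descent lemma implied by the Lipschitz-continuous gradient in Assumption \ref{lips_bg}: for consecutive iterates,
\begin{equation}
f(w^{t+1}) \le f(w^t) + \left\langle \nabla f(w^t), w^{t+1} - w^t \right\rangle + \frac{L}{2}\left\| w^{t+1} - w^t \right\|_2^2. \nonumber
\end{equation}
Since Algorithm \ref{alg} updates each module by $w^{t+1}_{\mathcal{G}(k)} = w^t_{\mathcal{G}(k)} - \gamma_t \, g^t_{\mathcal{G}(k)}$, the full step is $w^{t+1} - w^t = -\gamma_t \sum_{k=1}^K g^t_{\mathcal{G}(k)}$, which I would substitute directly into the lemma.

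First I would take the expectation conditioned on the history up to iteration $t$, so that $\nabla f(w^t)$ is deterministic and can be pulled outside. The linear term then becomes $-\gamma_t \langle \nabla f(w^t), \mathbb{E}[\sum_k g^t_{\mathcal{G}(k)}] \rangle$, to which I apply the sufficient-direction bound of Assumption \ref{ass_4} to obtain the lower bound $\sigma \|\nabla f(w^t)\|_2^2$; the quadratic term is controlled by the bounded second moment $\mathbb{E}\|\sum_k g^t_{\mathcal{G}(k)}\|_2^2 \le M$ of Assumption \ref{lips_bg}. Taking total expectations yields the one-step inequality
\begin{equation}
\mathbb{E}\left[ f(w^{t+1}) \right] \le \mathbb{E}\left[ f(w^t) \right] - \sigma \gamma_t \, \mathbb{E}\left\| \nabla f(w^t) \right\|_2^2 + \frac{L \gamma_t^2 M}{2}. \nonumber
\end{equation}

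Next I would rearrange this to isolate $\sigma \gamma_t \, \mathbb{E}\|\nabla f(w^t)\|_2^2$ and sum over $t = 0, \dots, T-1$. The function-value differences telescope, leaving $f(w^0) - \mathbb{E}[f(w^T)]$, and I would bound $-\mathbb{E}[f(w^T)] \le -f(w^*)$ using optimality of $w^*$. Dividing through by $\sigma \Gamma_T$ with $\Gamma_T = \sum_{t} \gamma_t$ produces exactly the claimed bound (\ref{them2_iq_1001}).

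I expect no serious obstacle here: once Assumption \ref{ass_4} folds the bias introduced by replaying stale features into a sufficient-descent guarantee, the analysis reduces to the standard nonconvex SGD descent argument, and Theorem \ref{them1} is recovered as the special case $\gamma_t \equiv \gamma$. The only point demanding care is the bookkeeping with conditional expectations --- since $w^t$ is itself random, I would read Assumptions \ref{ass_4} and \ref{lips_bg} as holding conditionally on the history at iteration $t$ before passing to total expectation, which is what makes the telescoping legitimate --- and I must retain the per-step $\gamma_t^2$ inside the sum rather than collapsing it, so that the final right-hand side correctly carries the factor $\sum_t \gamma_t^2 / \Gamma_T$. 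Under the Robbins--Monro conditions (\ref{dimi}) this ratio vanishes as $T \to \infty$, which delivers the convergence conclusion.
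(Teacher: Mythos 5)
Your proposal is correct and follows essentially the same route as the paper: the paper proves a per-iteration descent inequality (its Lemma \ref{lem1}, obtained exactly as in your first two paragraphs from the Lipschitz descent lemma, the update rule, and Assumptions \ref{ass_4} and \ref{lips_bg}), then takes total expectation, sums and telescopes over $t$, bounds $-\mathbb{E}[f(w^T)] \le -f(w^*)$ by optimality, and divides by $\sigma \Gamma_T$. The only cosmetic difference is that you inline the lemma rather than cite it, and your explicit handling of conditional versus total expectation is a point the paper passes over more quickly.
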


\begin{remark}
	Suppose $w^s$ is chosen randomly from $\{w^t \}_{t=0}^{T-1}$  with probabilities proportional to  $ \{\gamma_t \}_{t=0}^{T-1}$.  According to  Theorem \ref{them2}, we can prove that Algorithm \ref{alg} guarantees convergence to critical points for the non-convex problem:
	\begin{eqnarray}
	\lim\limits_{s\rightarrow \infty} \mathbb{E} \|\nabla f(w^s)\|_2^2& =& 0\,.
	\end{eqnarray}
\end{remark}

\begin{figure}[t]
	\centering
	\includegraphics[width=5in]{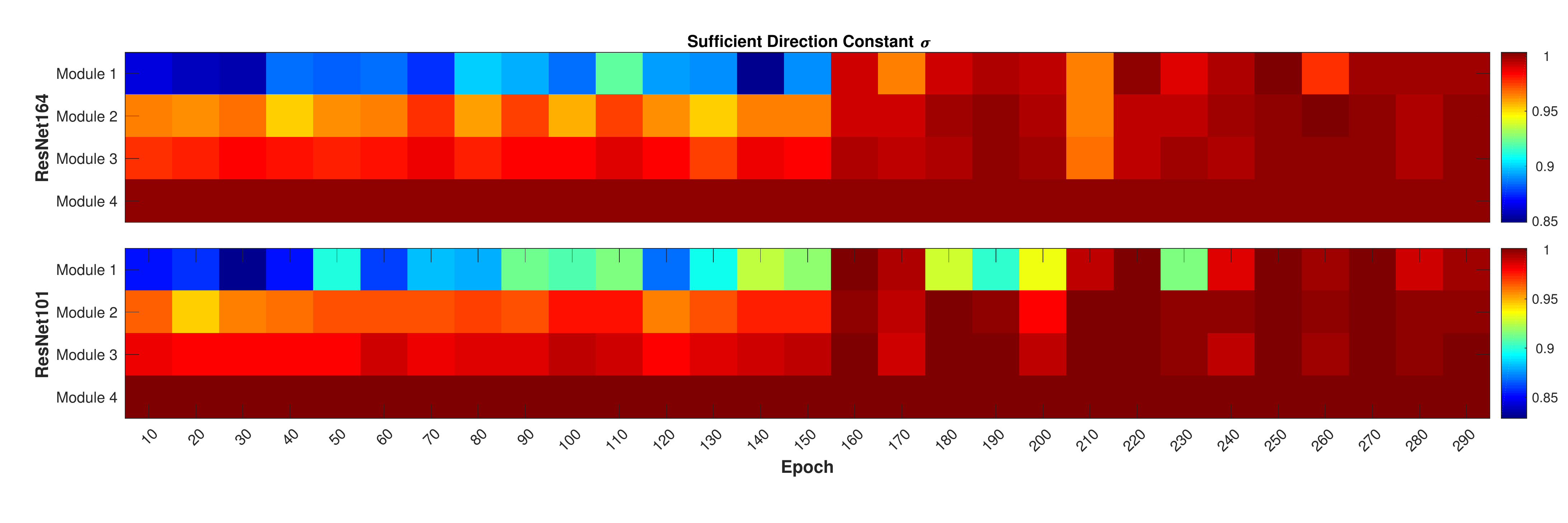}
	\caption{ Sufficient direction constant $\sigma$  for  ResNet164 and ResNet101 on CIFAR-10.  }
	\label{odp}
\end{figure}
\section{Experiments}
\label{sec_exp}
In this section, we validate our method with experiments training deep convolutional neural networks. Experimental results show that the proposed method achieves \textbf{faster} convergence, \textbf{lower} memory consumption and \textbf{better} generalization error than compared methods.
\subsection{Experimental Setting}
\textbf{Implementations:} We implement our method in PyTorch \cite{paszke2017automatic}, and evaluate it with ResNet models \cite{he2016deep} on two image classification benchmark datasets: CIFAR-10 and CIFAR-100 \cite{krizhevsky2009learning}. We adopt the standard data augmentation techniques
in \cite{he2016deep,huang2016densely,lin2013network} for training these two datasets: random cropping, random
horizontal flipping and normalizing. We use SGD with the momentum of $0.9$, and the stepsize is initialized to $0.01$. Each model is trained using batch size $128$ for $300$ epochs and the stepsize is divided by a factor of $10$ at $150$ and $225$ epochs. The weight decay constant is set to $5 \times 10^{-4}$. In the experiment, a neural network with $K$ modules is sequentially distributed across $K$ GPUs. All experiments are performed on a server with four Titan X GPUs.

\textbf{Compared Methods:} We compare the performance of four methods in the experiments, including:

$\bullet$ BP: we use the backpropagation algorithm \cite{rumelhart1988learning} in PyTorch Library. \\
$\bullet$ DNI:  we implement the decoupled neural interface in \cite{jaderberg2016decoupled}. Following \cite{jaderberg2016decoupled}, the synthetic network has two  hidden convolutional  layers with $5 \times 5$ filters, padding of size $2$, batch-normalization \cite{ioffe2015batch} and ReLU \cite{nair2010rectified}. The output layer is a convolutional layer with $5 \times 5 $ filters and padding size of $2$.\\
$\bullet$ DDG: we implement the decoupled parallel backpropagation in \cite{ddgpaper}. \\
$\bullet$ FR: features replay algorithm in Algorithm \ref{alg}.

\begin{figure}[t]
	\centering
	\begin{subfigure}[b]{0.325\textwidth}
		\centering
		\includegraphics[width=1.95in]{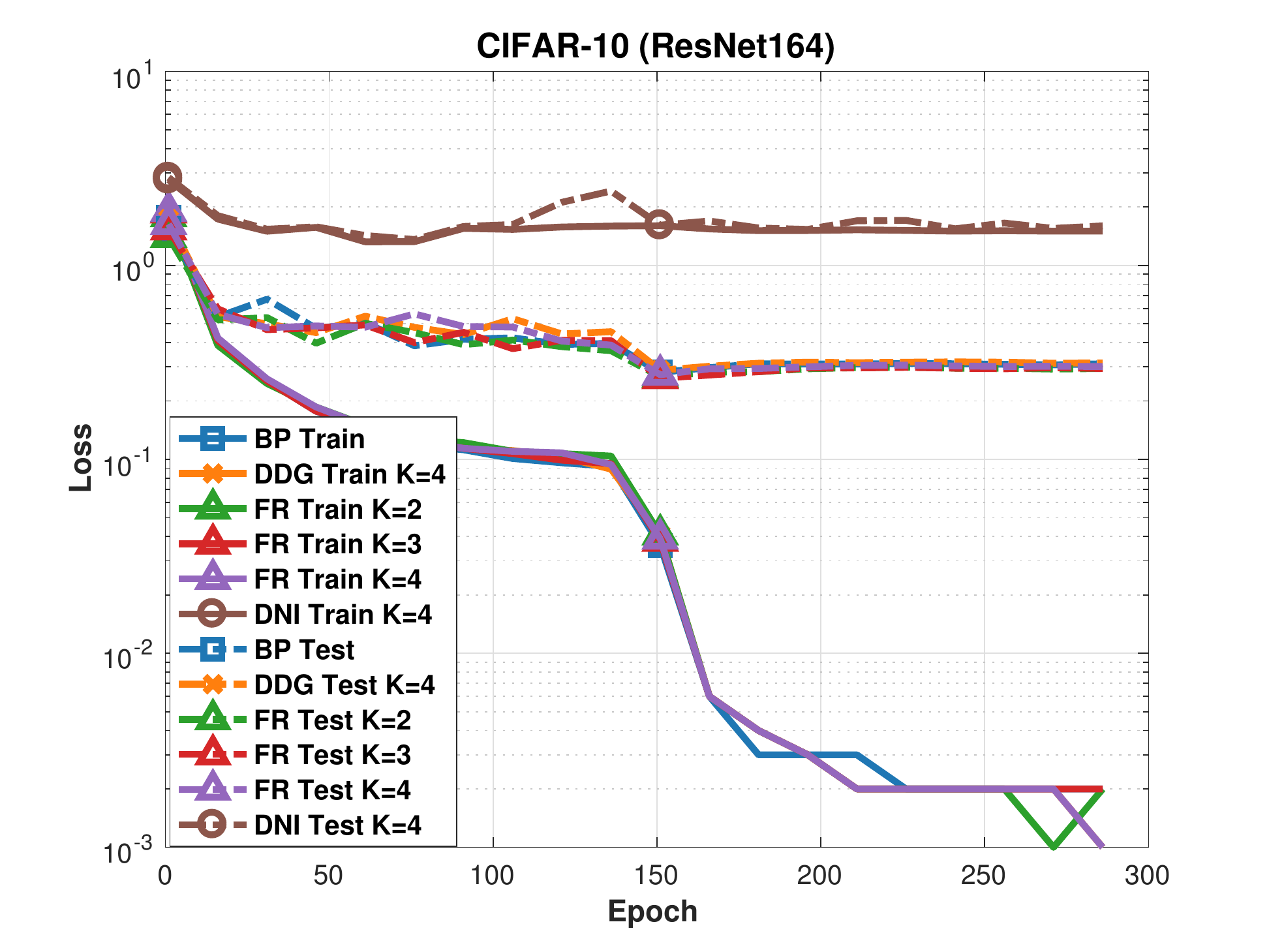}
	\end{subfigure}
	\begin{subfigure}[b]{0.325\textwidth}
		\centering
		\includegraphics[width=1.95in]{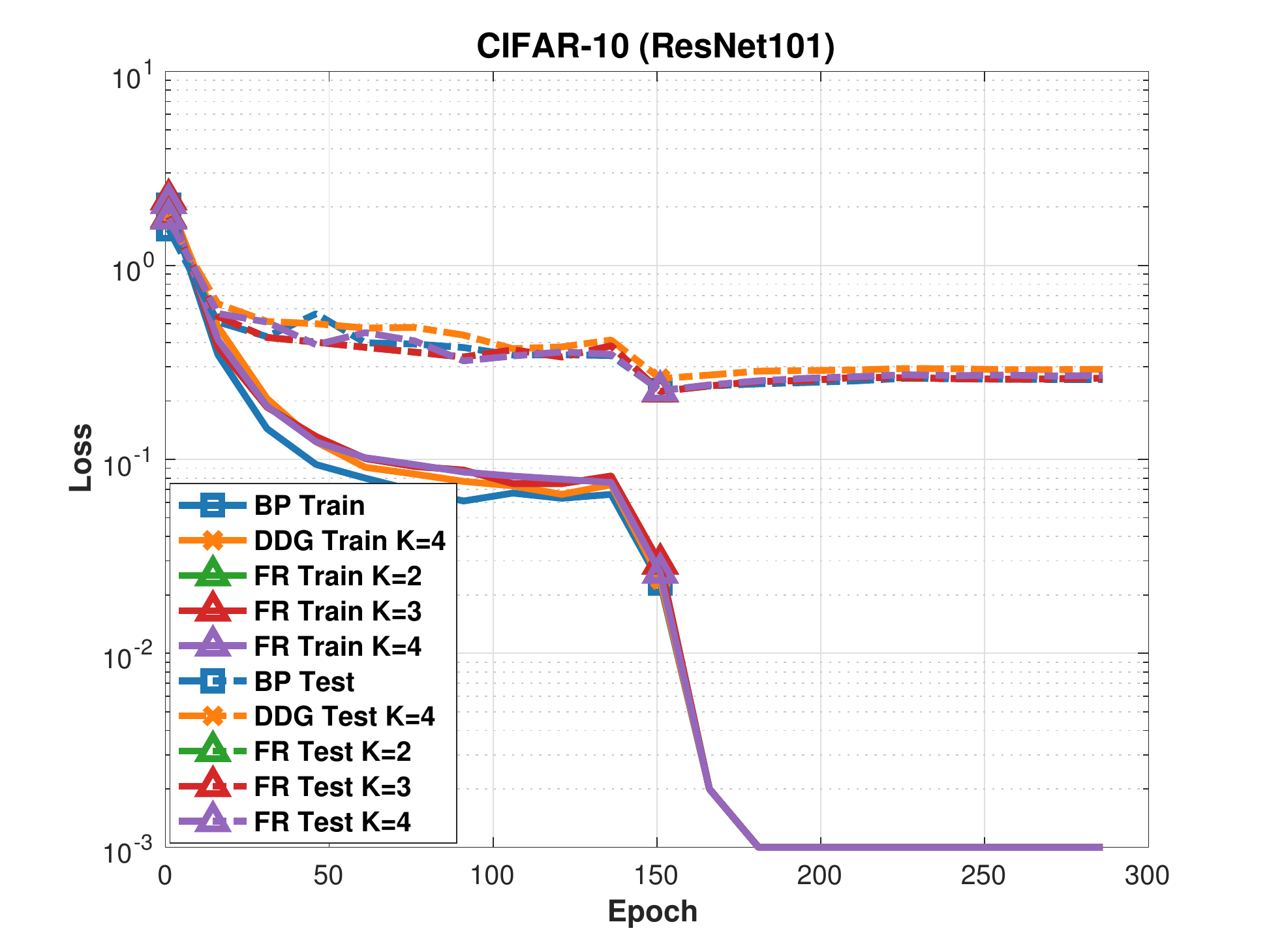}
	\end{subfigure}
	\begin{subfigure}[b]{0.325\textwidth}
		\centering
		\includegraphics[width=1.95in]{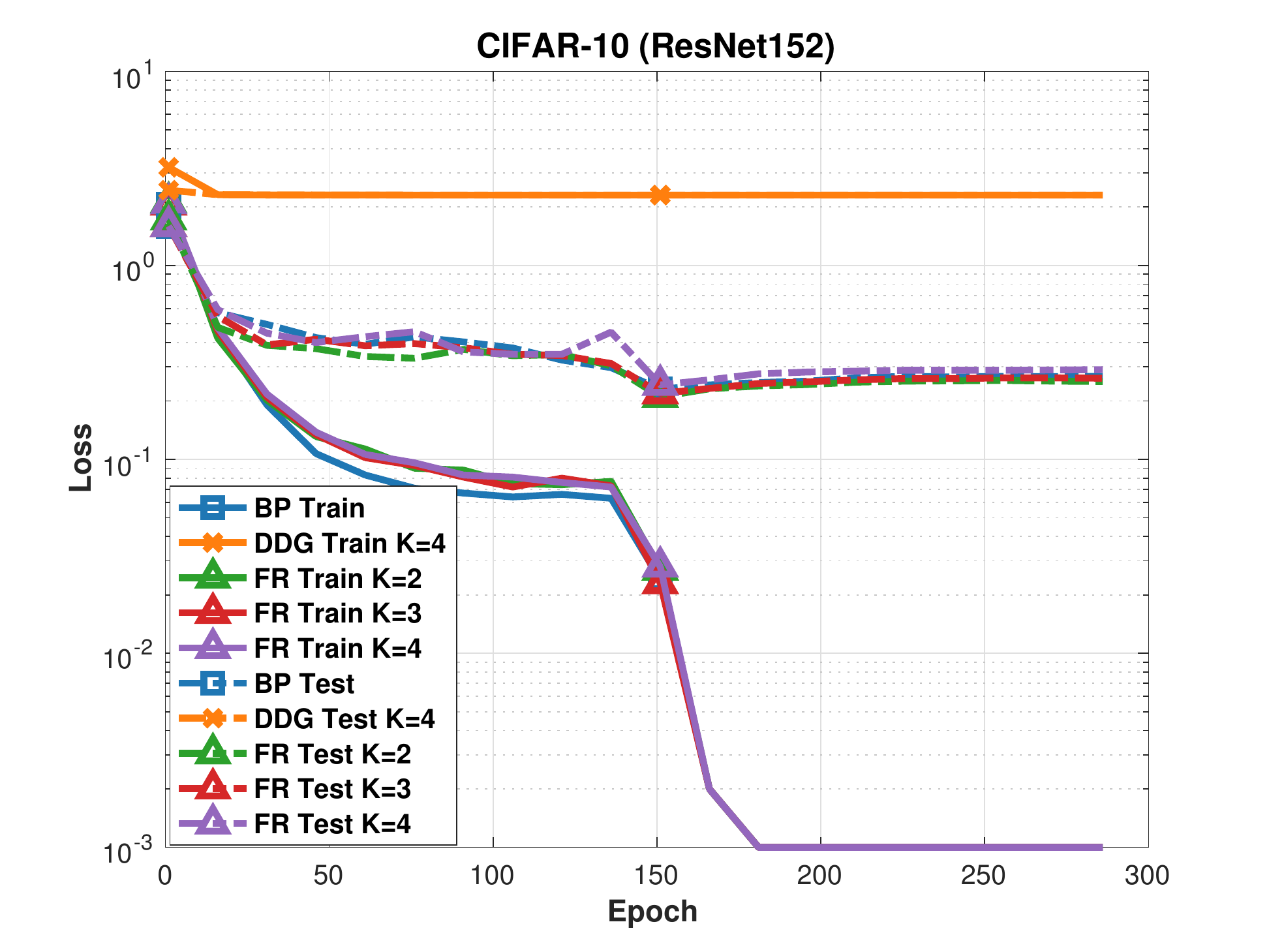}
	\end{subfigure}
	\begin{subfigure}[b]{0.325\textwidth}
		\centering
		\includegraphics[width=1.95in]{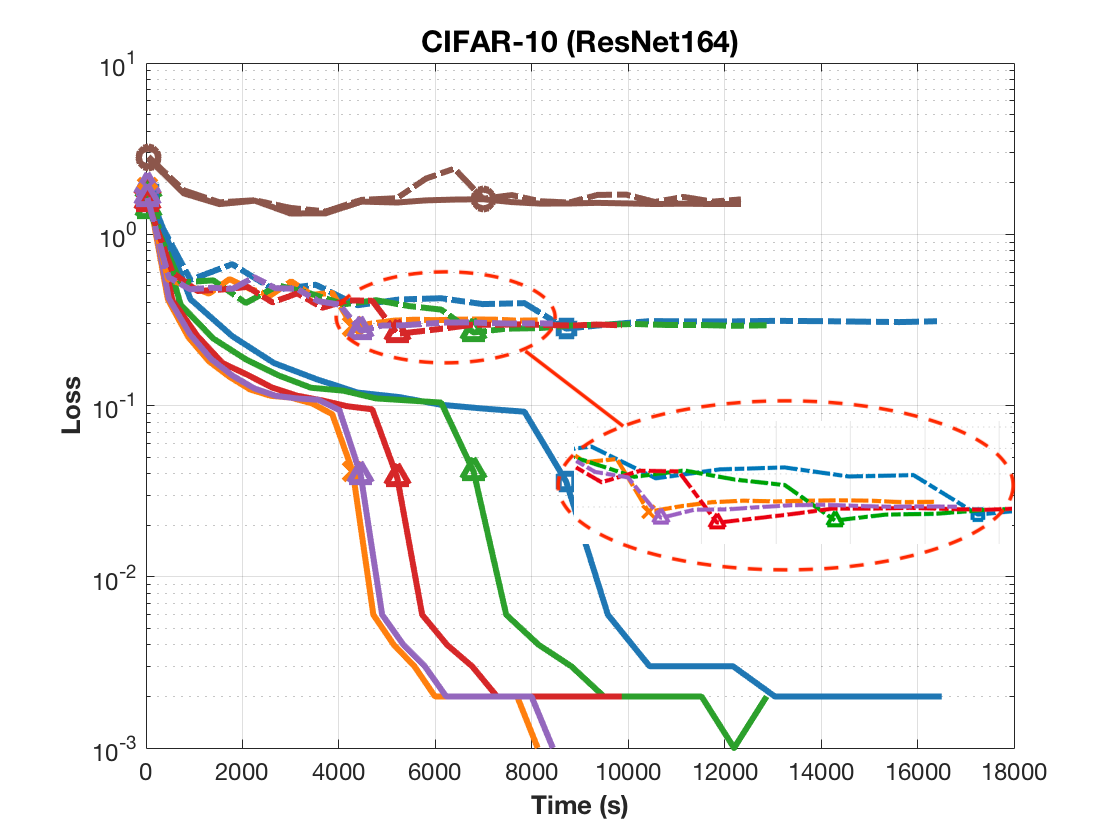}
	\end{subfigure}
	\begin{subfigure}[b]{0.325\textwidth}
		\centering
		\includegraphics[width=1.95in]{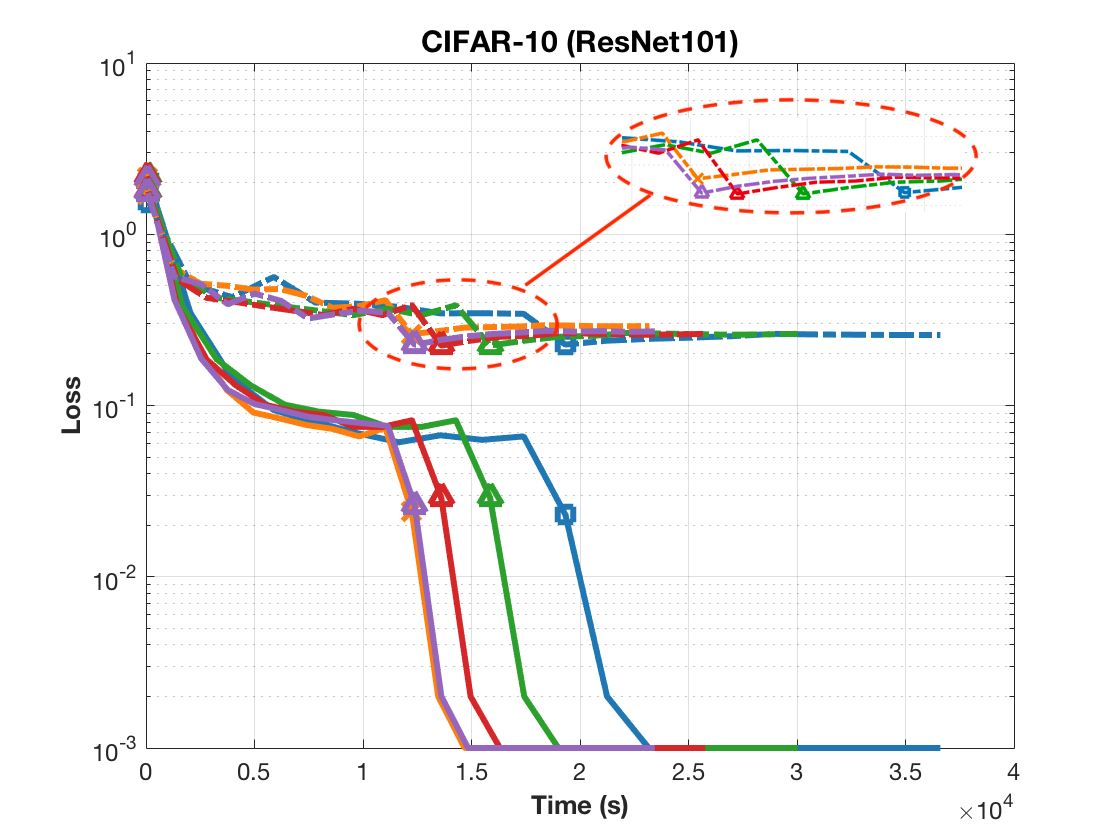}
	\end{subfigure}
	\begin{subfigure}[b]{0.325\textwidth}
		\centering
		\includegraphics[width=1.93in]{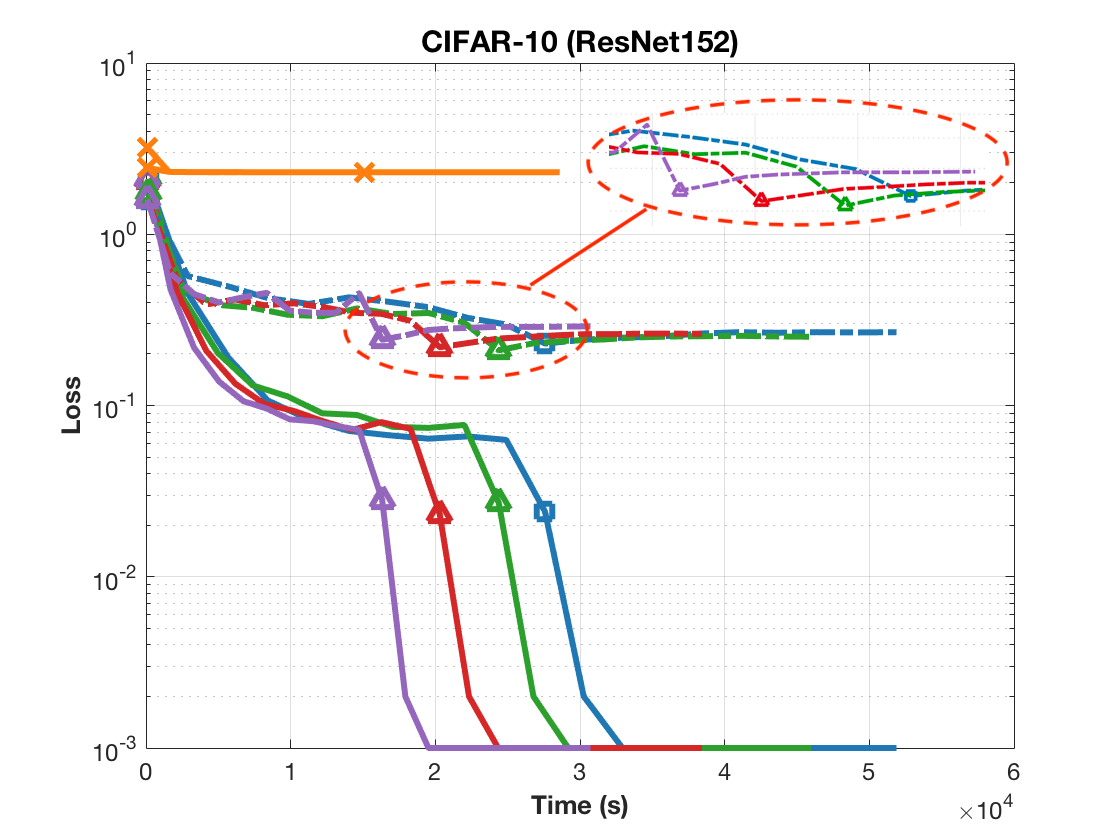}
	\end{subfigure}
	\caption{Training and testing curves for ResNet-164, ResNet101 and ResNet152 on CIFAR-10. Row 1 and row 2 present the convergence of the loss function regrading epochs and computational time respectively.   Because DNI diverges for all models,  we only plot the result of DNI  for ResNet164. }
	\label{conv}
\end{figure}
\subsection{Sufficient Direction}
We demonstrate that the proposed method satisfies Assumption \ref{ass_4} empirically.
In the experiment, we divide ResNet164 and ResNet 101 into $4$ modules and visualize the variations of the sufficient direction constant $\sigma$ during the training period in Figure \ref{odp}.  Firstly, it is obvious that the values of  $\sigma$ of these modules are larger than  $0$ all the time. Therefore, Assumption \ref{ass_4} is satisfied such that Algorithm \ref{alg} is guaranteed to converge to the critical points for the non-convex problem.  Secondly, we can observe that the values of $\sigma$ of the lower modules are relatively small at the first half epochs, and become close to $1$  afterwards. The variation of $\sigma$ indicates the difference between the descent direction of FR and the steepest descent direction. Small $\sigma$ at early epochs can help the method escape from saddle points and find better local minimum; large $\sigma$ at the final epochs can prevent the method from diverging. In the following context, we will show that our method has better generation error than compared methods.

\label{sufficient-direction}

\subsection{ Performance Comparisons}
\label{conv-sec}
To evaluate the performance of the compared methods, we utilize three criterion in the experiment including convergence speed, memory consumption, and generalization error.

\textbf{Faster Convergence:} In the experiments, we evaluate the compared methods with three ResNet models:  ResNet164 with the basic building block, ResNet101 and ResNet152 with the bottleneck building block \cite{he2016deep}. The performances of the compared methods on CIFAR-10 are shown in Figure \ref{conv}.
There are several nontrivial observations as follows:  Firstly, DNI cannot converge for all models. 
The synthesizer network in \cite{jaderberg2016decoupled} is so small that it cannot learn an accurate approximation of the error gradient when the network is deep. Secondly, DDG cannot converge for the model ResNet152 when we set $K=4$. The stale gradients can impose noise in the optimization and lead to divergence. Thirdly, our method converges much faster than BP when we increase the number of modules. In the experiment, the proposed method FR can achieve a speedup of up to $2$ times compared to BP. We do not consider data parallelism for BP in this section. In the supplementary material, we show that our method also converges faster than BP with data parallelism.

\begin{figure}[t]
	\begin{minipage}[b]{0.46\textwidth}
		\centering
		\includegraphics[width=2.6in]{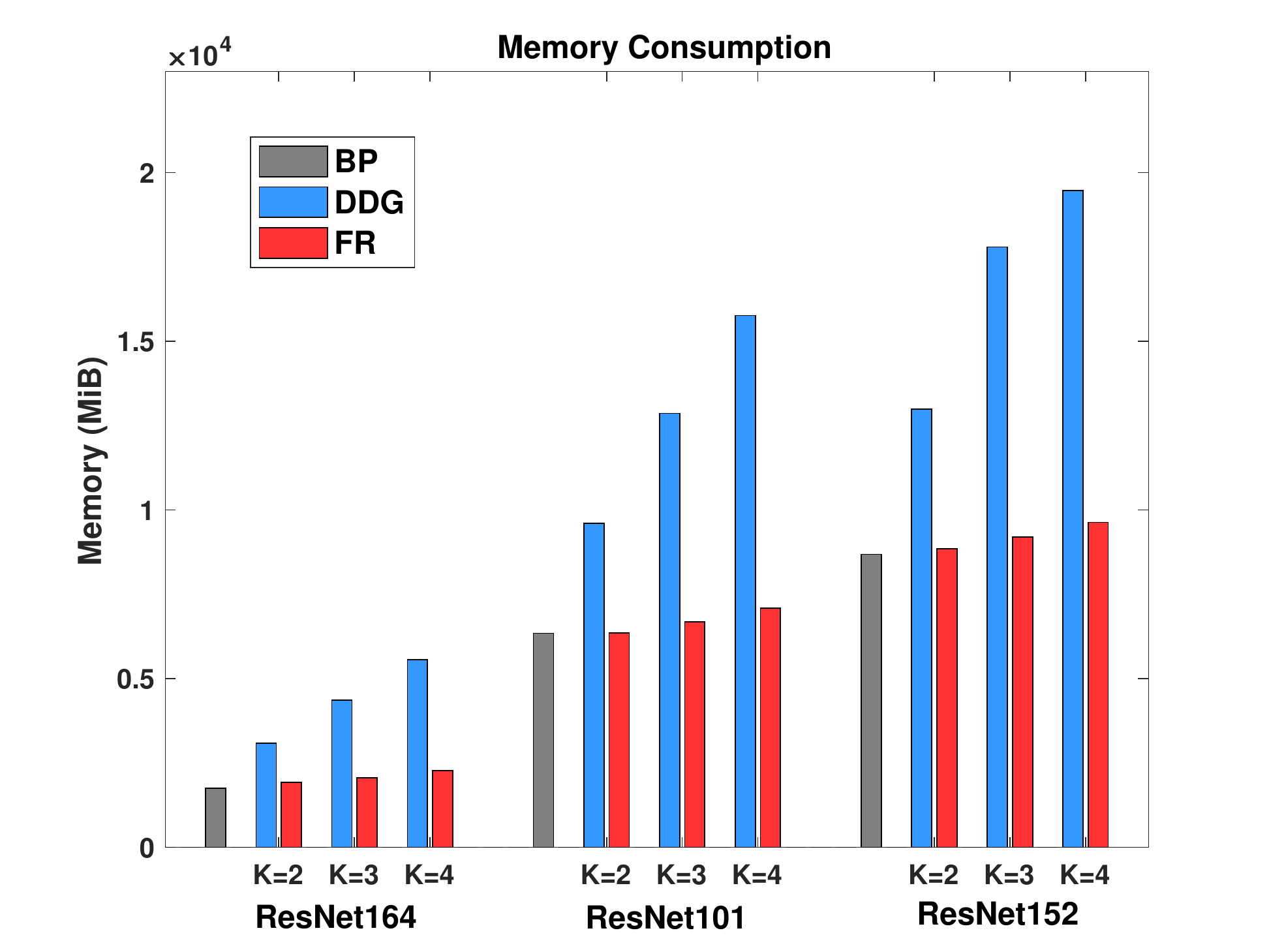}
		\captionof{figure}{Memory consumption  for ResNet164, ResNet101 and ResNet152. We do not report the memory consumption of DNI because it does not converge. DDG also diverges when $K=3,4$ for ResNet152. 	}
		\label{mem_fig}
	\end{minipage}
\qquad
	\begin{minipage}[b]{0.46\textwidth}
		\centering
		\begin{tabular}{ccc}
			\hline
			\multirow{2}{*}{Algorithm} &Backward& Memory \\
			&Locking& (Activations) \\\hline
			BP \cite{rumelhart1988learning} & yes&  $\mathcal{O}(L)$    \\\hline
			DNI \cite{jaderberg2016decoupled}  &no  & $\mathcal{O}(L + KL_s)$   \\
			DDG \cite{ddgpaper}& no & $\mathcal{O}(LK + K^2)$   \\
			FR &no & $\mathcal{O}(L+K^2)$  \\\hline
		\end{tabular}
		\captionof{table}{Comparisons of memory consumption of the neural network with $L$ layers, which is divided into $K$ modules and $L\gg K$. We use $\mathcal{O}(L)$ to represent the memory consumption of the activations. For DNI, each gradient synthesizer has $L_s$ layers. From the experiments, it is reasonable to assume that $L_s \gg K$ to make the algorithm converge. The memory consumed by the weights is negligible compared to the activations.}
		\label{mem_table}
	\end{minipage}
\end{figure}

\textbf{Lower Memory Consumption:} In Figure \ref{mem_fig}, we present the memory consumption of the compared methods for three models when we vary the number of modules $K$. We do not consider DNI because it does not converge for all models. It is evident that the memory consumptions of FR and BP are very close. On the contrary, when $K=4$, the memory consumption of DDG is more than two times of the memory consumption of BP.  The observations in the experiment are also consistent with the analysis in Table \ref{mem_table}. For DNI, since a three-layer synthesizer network cannot converge, it is reasonable to assume that $L_s$ should be large if the network is very deep. We do not explore it because it is out of the scope of this paper. We always set $K$ very small such that $K \ll  L$ and $K \ll L_s$.  FR can still obtain a good speedup when $K$ is very small according to the second row in Figure \ref{conv}.

\begin{wraptable}{r}{9cm}
		\centering
\begin{tabular}{ccccc}
	\hline
		{Model} & {CIFAR \cite{krizhevsky2009learning}} & {BP \cite{rumelhart1988learning} } & {DDG  \cite{ddgpaper} } & {FR} \\
	\hline
	\multirow{2}{*}{ResNet164} &  C-10  & 6.40 &6.45 & \textbf{6.03} \\
	& C-100  & 28.53 & 28.51  & \textbf{27.34 } \\
	\hline
	\multirow{2}{*}{ResNet101 } &  C-10  &5.25  & 5.35  &\textbf{4.97 }    \\
	& C-100  & 23.48 &  24.25   &  \textbf{23.10}  \\
	\hline
	\multirow{2}{*}{ResNet152 } &  C-10  & 5.26 &5.72  & \textbf{4.91 }  \\
	& C-100  & 25.20 & 26.39   & \textbf{23.61 } \\
	\hline
\end{tabular}
	\caption{Best testing error rates ($\%$) of the compared methods on CIFAR-10 and CIFAR-100 datasets. For DDG and FR, we set $K=2$ in the experiment.}
	\label{acc}
\end{wraptable}

\textbf{Better Generalization Error:} Table \ref{acc} shows the best testing error rates for the compared methods. We do not report the result of  DNI because it does not converge. We can observe that FR always obtains better testing error rates than other two methods BP and DDG by a large margin. We think it is related to the variation of the sufficient descent constant  $\sigma$. Small $\sigma$ at the early epochs help FR escape saddle points and find better local minimum, large $\sigma$ at the final epochs prevent FR from diverging. DDG usually performs worse than BP because the stale gradients impose noise in the optimization, which is commonly observed in asynchronous algorithms with stale gradients \cite{chen2016revisiting}.

\section{Conclusion}
In this paper, we proposed a novel parallel-objective formulation for the objective function of the neural network and broke the backward locking using a new features replay algorithm. Besides the new algorithms, our theoretical contributions include analyzing the convergence property of the proposed method and proving that our new algorithm is guaranteed to converge to critical points for the non-convex problem under certain conditions. We conducted experiments with deep convolutional neural networks on two image classification datasets, and all experimental results verify that the proposed method can achieve {faster} convergence, {lower} memory consumption, and {better} generalization error than compared methods.

\bibliographystyle{plain}
\bibliography{FR}  

\appendix

\section{Proof}

\begin{lemma}
	\label{lem1}
	Assume that Assumptions \ref{ass_4} and \ref{lips_bg} hold. The iterations in Algorithm \ref{alg} satisfy the following inequality, for all $t \in \mathbb{N}$:
	\begin{eqnarray}
	\mathbb{E}[f(w^{t+1})] - f(w^t)  &\leq &-  \sigma \gamma_t \left\| \nabla f(w^t) \right\|_2^2 + \frac{ \gamma_t^2LM}{2}
	\label{lem1_iq}
	\end{eqnarray}
\end{lemma}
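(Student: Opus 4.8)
The plan is to establish this one-step descent inequality through the standard $L$-smoothness argument, with the two assumptions deployed at exactly the two places where a stochastic descent proof needs them. First I would observe that the per-module updates in Algorithm \ref{alg} assemble into a single vector update on the full weight $w^t$: writing $g^t := \sum_{k=1}^K g^t_{\mathcal{G}(k)}$ and treating each $g^t_{\mathcal{G}(k)}$ as a $d$-dimensional vector supported on the coordinates of module $k$, applying the line $w^{t+1}_{\mathcal{G}(k)} = w^t_{\mathcal{G}(k)} - \gamma_t g^t_{\mathcal{G}(k)}$ across all $k$ is equivalent to $w^{t+1} = w^t - \gamma_t g^t$. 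This reconciles the block-wise algorithm with the aggregated quantities $\mathbb{E}[g^t]$ and $\mathbb{E}\|g^t\|_2^2$ that appear in Assumptions \ref{ass_4} and \ref{lips_bg}.

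Next I would invoke the quadratic upper bound implied by the Lipschitz-continuous gradient assumption: for any $x,y \in \mathbb{R}^d$, $f(y) \leq f(x) + \langle \nabla f(x), y - x\rangle + \frac{L}{2}\|y-x\|_2^2$. Substituting $x = w^t$ and $y = w^{t+1} = w^t - \gamma_t g^t$ yields
\[
f(w^{t+1}) \leq f(w^t) - \gamma_t \langle \nabla f(w^t), g^t\rangle + \frac{\gamma_t^2 L}{2}\|g^t\|_2^2.
\]
Taking expectation, conditional on $w^t$ and then using the tower property, gives $\mathbb{E}[f(w^{t+1})] \leq f(w^t) - \gamma_t \langle \nabla f(w^t), \mathbb{E}[g^t]\rangle + \frac{\gamma_t^2 L}{2}\mathbb{E}\|g^t\|_2^2$, where $\nabla f(w^t)$ is deterministic given $w^t$ and can therefore be pulled out of the inner-product expectation.

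The proof then closes by bounding the two remaining terms separately. For the linear term I would apply the sufficient-direction Assumption \ref{ass_4}, which gives $\langle \nabla f(w^t), \mathbb{E}[g^t]\rangle \geq \sigma \|\nabla f(w^t)\|_2^2$ and hence $-\gamma_t \langle \nabla f(w^t), \mathbb{E}[g^t]\rangle \leq -\sigma\gamma_t\|\nabla f(w^t)\|_2^2$; for the quadratic term I would apply the bounded-variance part of Assumption \ref{lips_bg} directly as $\mathbb{E}\|g^t\|_2^2 \leq M$. Combining the two bounds produces exactly inequality (\ref{lem1_iq}).

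The step I expect to matter most is not any single inequality — the argument is a textbook descent bound once the assumptions are in hand — but rather recognizing that all the genuine difficulty of the features-replay scheme (the stale error-gradient variables $\delta_k^t$, the replayed activations $\tilde h^t_{L_k}$, and the mismatched timestamps $t+k-K$) has already been absorbed into Assumption \ref{ass_4}. Consequently the only real subtlety here is bookkeeping: ensuring the block updates are correctly aggregated into $g^t$ so that $\mathbb{E}[g^t]$ and $\mathbb{E}\|g^t\|_2^2$ are precisely the quantities the assumptions control, and handling the conditional expectation so that $\nabla f(w^t)$ is treated as deterministic. No stepsize condition is needed at this stage; the requirements on $\{\gamma_t\}$ enter only when summing (\ref{lem1_iq}) over $t$ to prove Theorems \ref{them1} and \ref{them2}.
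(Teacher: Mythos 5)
Your proof is correct and follows essentially the same route as the paper's: the $L$-smoothness quadratic upper bound applied to the aggregated update $w^{t+1}=w^t-\gamma_t\sum_{k}g^t_{\mathcal{G}(k)}$, followed by Assumption \ref{ass_4} on the inner-product term and the bounded second moment from Assumption \ref{lips_bg} on the quadratic term. Your added bookkeeping (assembling the block updates into a single vector and handling the conditional expectation explicitly) only makes precise steps the paper leaves implicit.
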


\begin{proof}	
	Because the gradient of $f(w)$ is Lipschitz continuous in Assumption \ref{lips_bg},  the following inequality holds that:
	\begin{eqnarray}
	f(w^{t+1}) &\leq & f(w^t) + \left<  \nabla f(w^t),  w^{t+1} - w^{t} \right> + \frac{L}{2} \left\| w^{t+1} - w^t\right\|^2_2.
	\end{eqnarray}
	From the update rule in the Algorithm,  we take expectation on both sides and obtain:
	\begin{eqnarray}
	\mathbb{E} \left[ f(w^{t+1}) \right] &\leq & f(w^t) - \gamma_t \sum\limits_{k=1}^K \mathbb{E} \left< \nabla f(w^t),   g^t_{\mathcal{G}(k)}  \right>
	+ \frac{L\gamma_t^2}{2} \mathbb{E}\left\|\sum\limits_{k=1}^K g^t_{\mathcal{G}(k)} \right\|^2_2 \nonumber \\
	&\leq & f(w^t) -  \sigma \gamma_t \left\| \nabla f(w^t) \right\|_2^2 + \frac{ \gamma_t^2 LM }{2}
	\label{iq_1001}
	\end{eqnarray}
	where the second inequality follows from Assumptions  \ref{ass_4} and \ref{lips_bg}.
\end{proof}
\\$ \blacksquare$

\textbf{Proof of Theorem \ref{them1}}
\begin{proof}
	When $\gamma_t$ is constant and $\gamma_t = \gamma$, taking expectation of (\ref{lem1_iq}) in Lemma \ref{lem1}, we obtain:
	\begin{eqnarray}
	\mathbb{E} \left[ f(w^{t+1}) \right]  - \mathbb{E}\left[ f(w^t) \right] &\leq& -{\sigma \gamma}  \mathbb{E} \left\| \nabla f(w^t) \right\|^2_2  +   \frac{\gamma^2 L M}{2},
	\label{thm2_iq1}
	\end{eqnarray}
	Summing (\ref{thm2_iq1}) from $t=0$ to $T-1$, we have:
	\begin{eqnarray}
	\mathbb{E} \left[ f(w^{T}) \right]  - f(w^0)&\leq & -\sigma \gamma \sum\limits_{t=0}^{T-1}  \mathbb{E}\left\| \nabla f(w^t) \right\|^2_2  + \frac{\gamma^2 L MT} {2}.
	\end{eqnarray}
	Suppose $w^*$ is the optimal solution for $f(w)$, therefore $f(w^*) - f(w^0) \leq \mathbb{E} \left[f(w^T) \right]- f(w^0) $. Above all, the following inequality is guaranteed that:
	\begin{eqnarray}
	\frac{1}{T} \sum\limits_{t=0}^{T-1}\mathbb{E}  \left\| \nabla f(w^t) \right\|^2_2 &\leq& \frac{ f(w^0) - f(w^*) }{\sigma \gamma T} + \frac{\gamma L M} {2\sigma }.
	\end{eqnarray}
\end{proof}
\\$ \blacksquare$

\textbf{Proof of Theorem \ref{them2}}
\begin{proof}
	Taking total expectation of (\ref{lem1_iq}) in Lemma \ref{lem1} and summing it from $t=0$ to $T-1$, we obtain:
	\begin{eqnarray}
	\mathbb{E} \left[ f(w^{T}) \right]  - f(w^0)&\leq & - \sigma \sum\limits_{t=0}^{T-1} \gamma_t \mathbb{E}\left\| \nabla f(w^t) \right\|^2_2  +\frac{LM}{2} \sum\limits_{t=0}^{T-1} \gamma_t^2 .
	\end{eqnarray}
	Suppose $w^*$ is the optimal solution for $f(w)$, therefore $f(w^*) - f(w^0) \leq \mathbb{E} \left[f(w^T) \right]- f(w^0) $. Letting $\Gamma_T = \sum\limits_{t=0}^{T-1} \gamma_t$, we have:
	\begin{eqnarray}
	\frac{1}{\Gamma_T} \sum\limits_{t=0}^{T-1} \gamma_t\mathbb{E} \left\| \nabla f(w^t) \right\|^2_2& \leq& \frac{f(w^0) - f(w^*) }{ \sigma \Gamma_T} + \frac{LM}{2\sigma} \frac{\sum\limits_{t=0}^{T-1} \gamma_t^2}{\Gamma_T}.
	\end{eqnarray}
	We complete the proof.
	
\end{proof}

\section{Convergence Results Considering Data Parallelism}
We plot the convergence results of backpropagation algorithm (BP) and features replay (FR) regarding time for three ResNet models. We plot the fastest result of BP when we consider the data parallelism with the number of GPUs from $1$ to $4$. We get the result of FR by setting $K=4$. All experiments are performed on a server with four Titan X GPUs. Experimental results show that our method converges much faster than backpropagation algorithm with data parallelism.
\begin{figure}[h]
	\centering
	\begin{subfigure}[b]{0.325\textwidth}
		\centering
		\includegraphics[width=1.93in]{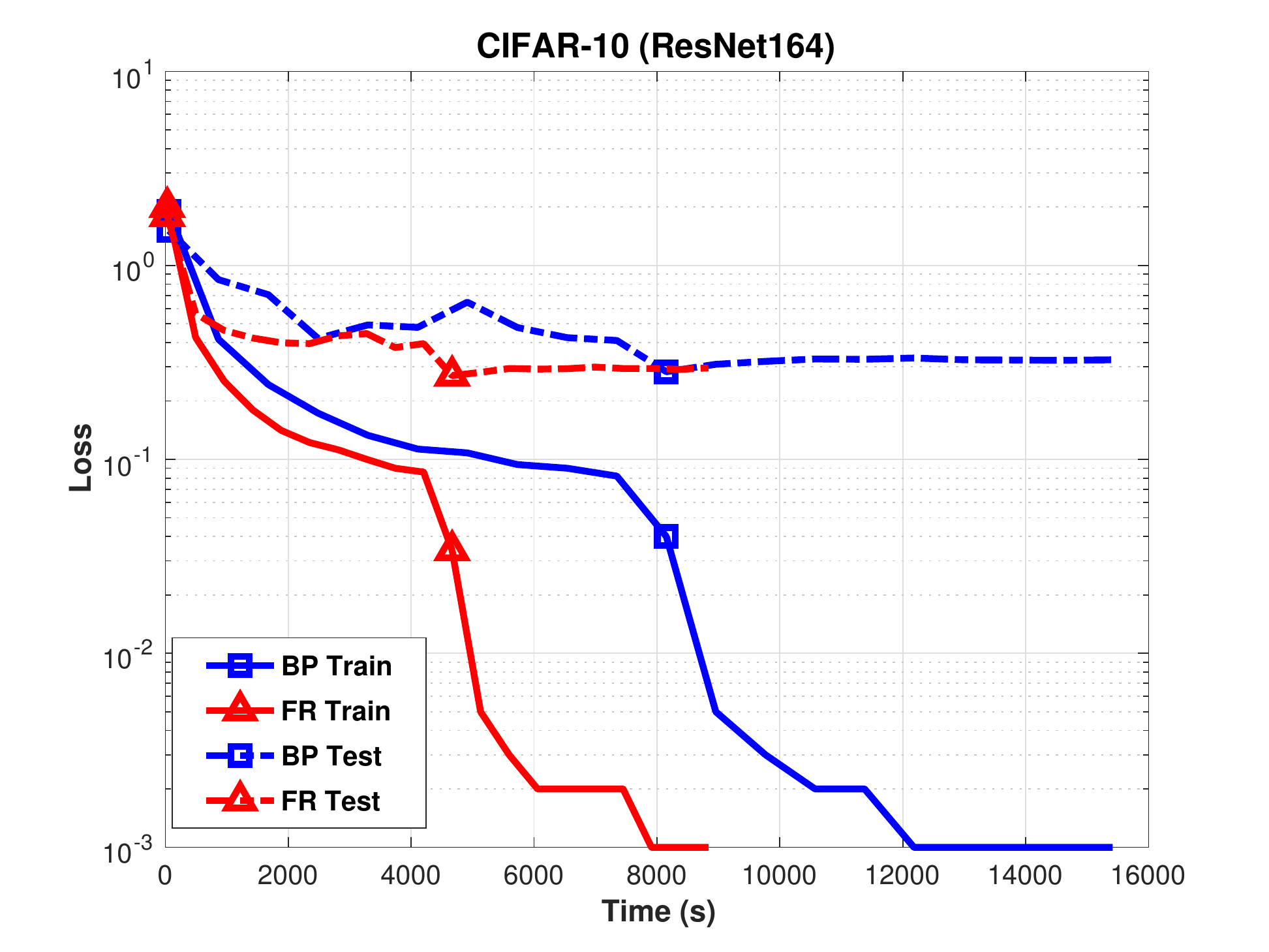}
	\end{subfigure}
	\begin{subfigure}[b]{0.325\textwidth}
		\centering
		\includegraphics[width=1.93in]{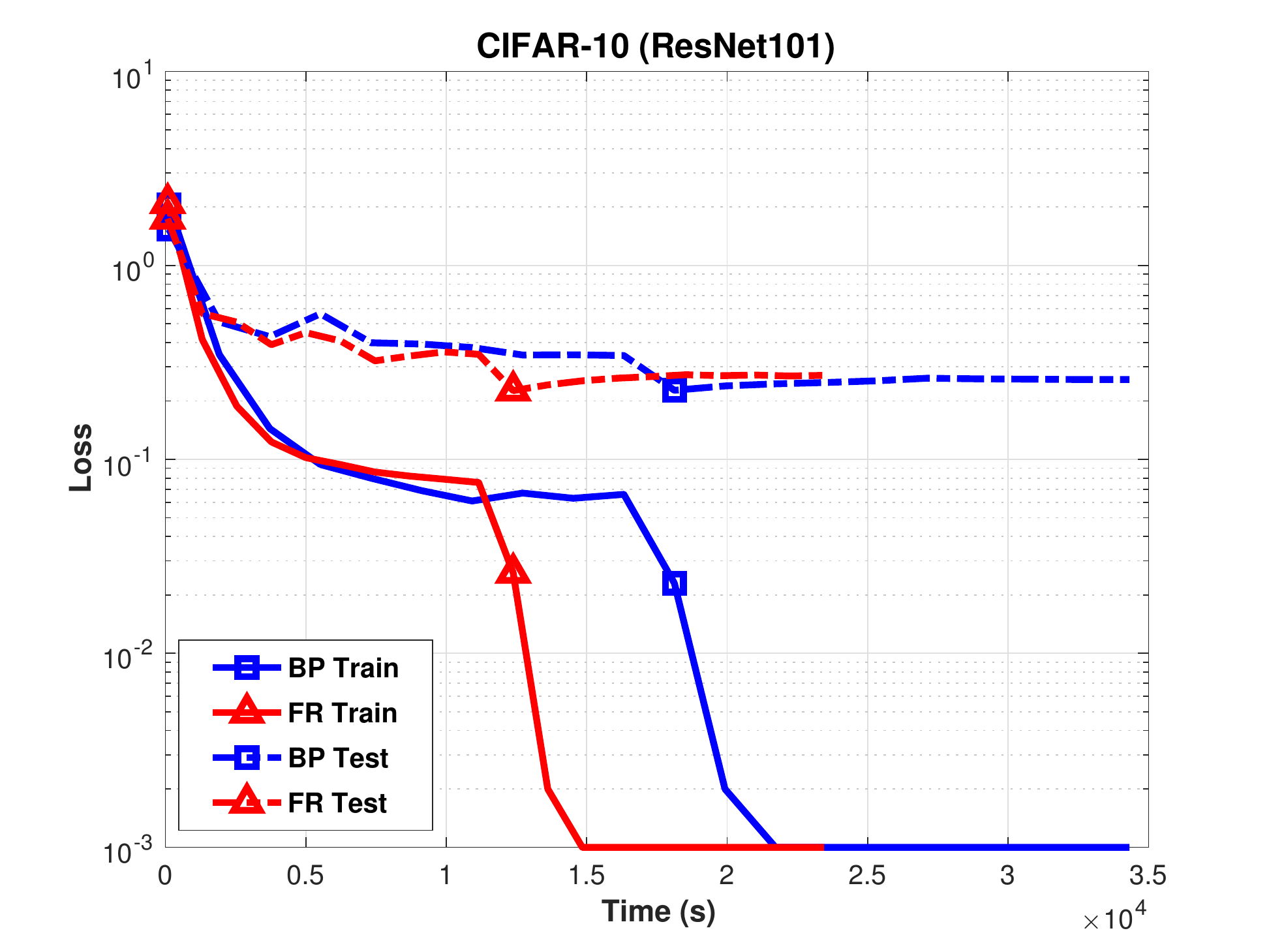}
	\end{subfigure}
	\begin{subfigure}[b]{0.325\textwidth}
		\centering
		\includegraphics[width=1.93in]{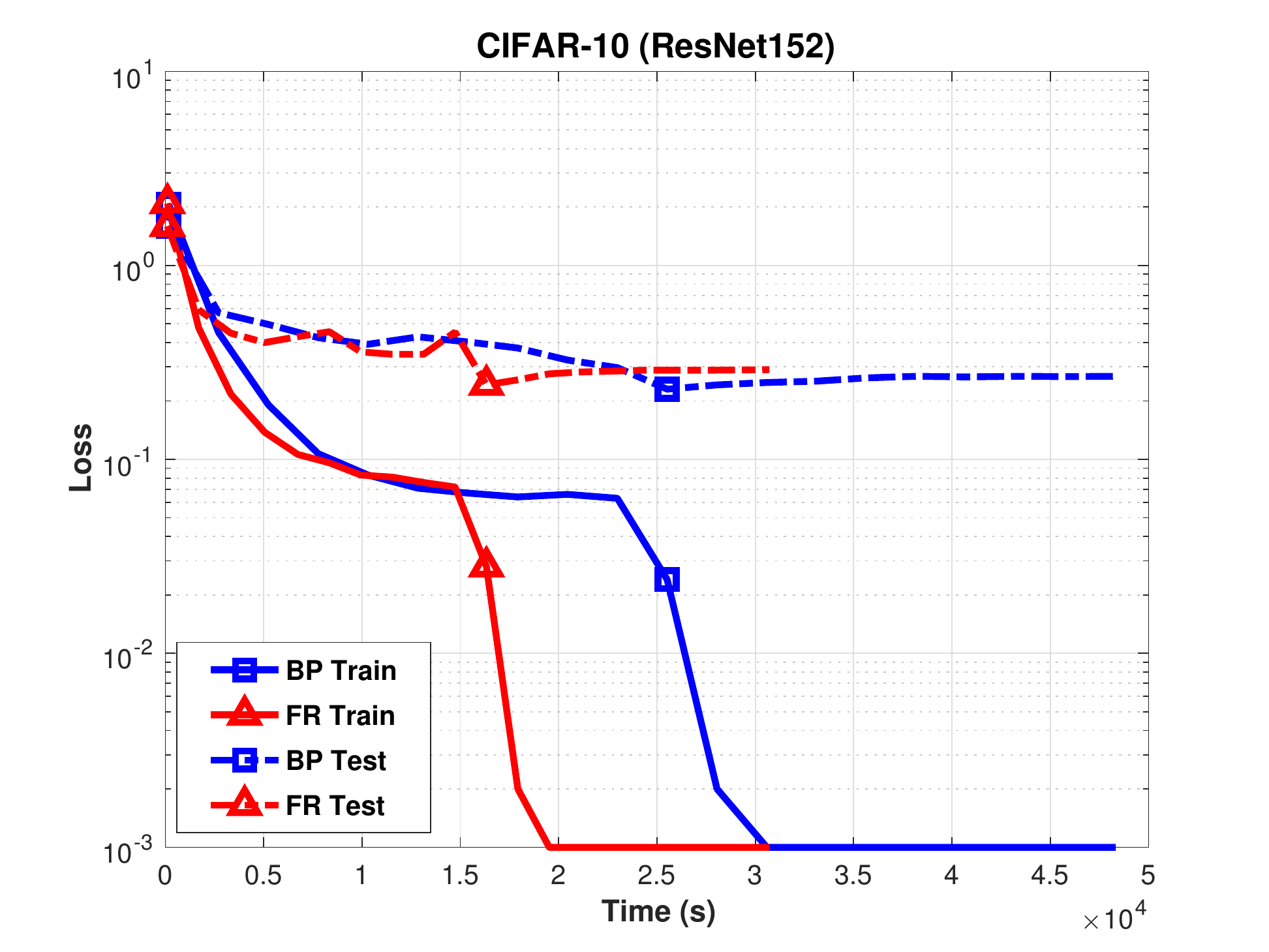}
	\end{subfigure}
	\caption{Training and testing curves regarding time for ResNet-164, ResNet101 and ResNet152 on CIFAR-10. }
	\label{conv1}
\end{figure}

\end{document}